\newtheorem{Lemma}{Lemma}
\newtheorem{Theorem}{Theorem}
\title{Estimation of Non-Normalized Mixture Models and \\Clustering Using Deep Representation}
\author{
  Takeru Matsuda \\
  Department of Mathematical Informatics\\
  The University of Tokyo, Japan \\
  \texttt{matsuda@mist.i.u-tokyo.ac.jp} \\
   \And
  Aapo Hyv\"arinen \\
  Gatsby Computational Neuroscience Unit\\
  University College London, UK \\
  Department of Computer Science and HIIT\\
  University of Helsinki, Finland \\
  \texttt{a.hyvarinen@ucl.ac.uk} \\
}
\begin{document}

\maketitle

\begin{abstract}
We develop a general method for estimating a finite mixture of non-normalized models.
Here, a non-normalized model is defined to be a parametric distribution with an intractable normalization constant.
Existing methods for estimating non-normalized models without computing the normalization constant are not applicable to mixture models 
because they contain more than one intractable normalization constant.
The proposed method is derived by extending noise contrastive estimation (NCE), 
which estimates non-normalized models by discriminating between the observed data and some artificially generated noise.
We also propose an extension of NCE with multiple noise distributions. 
Then, based on the observation that conventional classification learning with neural networks is implicitly assuming an exponential family as a generative model,
we introduce a method for clustering unlabeled data by estimating a finite mixture of distributions in an exponential family.
Estimation of this mixture model is attained by the proposed extensions of NCE where the training data of neural networks are used as noise.
Thus, the proposed method provides a probabilistically principled clustering method that is able to utilize a deep representation.
Application to image clustering using a deep neural network gives promising results.
\end{abstract}

\section{Introduction}
Our paper aims at combining two theoretical frameworks: non-normalized models, and mixture models; our motivating application is to learn clustering based on a representation learned by deep neural networks.

Many statistical models are given in the form of non-normalized densities with an intractable normalization constant; they are also called energy-based.
Since maximum likelihood estimation is computationally very intensive for these models,
several estimation methods have been developed which do not require the normalization constant (i.e.\ the partition function), or somehow estimate it as part of the estimation process.
These include pseudo-likelihood \cite{Besag}, contrastive divergence \cite{Hinton}, score matching \cite{SM}, and noise contrastive estimation \cite{Gutmann}. 

On the other hand, mixture models are a well-known general-purpose approach to unsupervised modelling of complex distributions, especially in the form of the Gaussian Mixture Model. In particular, estimation of a finite mixture model leads to a probabilistically principled clustering method. Compared to other clustering methods such as hierarchical clustering and K-means clustering,
such model-based methods naturally quantify the uncertainty of each membership.

An application where non-normalized models and mixture models naturally meet is learning a clustering based on features learned by a neural network.
Deep neural networks have been shown to learn useful representations from labeled data such as ImageNet, and such representations seem to be useful for analyzing other datasets, or for performing other tasks.
For example, many neural networks trained for ImageNet competition \cite{Krizhevsky,Szegedy} are publicly available 
and they work well as feature extractors in natural image processing.
Exploiting such a learned representation for other datasets or tasks is a fundamental case of transfer learning.
Although transfer learning is well established for supervised learning such as classification,
how to transfer the learned representation to unsupervised learning such as clustering is still unclear. A straightforward approach would be to use the neural network features in a mixture model, but what such a model means and how it can be estimated is unclear.


In this study, we develop a general method for estimating a finite mixture of non-normalized models. It is not known if any of the aforementioned methods is applicable in such a setting, since we have several normalization constants instead of a single one. The proposed method is expected to significantly increase the practicality of non-normalized mixture models, 
which have been scarse, presumably due to the lack of a practical estimation method. As an application of great practical interest, we apply the framework for transferring a deep image representation to clustering of unlabeled data.
Our approach provides a probabilistically principled solution for the clustering problem, building a probabilistic model that propagates back to the original data space. 

To accomplish our goal, first, we extend noise contrastive estimation (NCE) to a finite mixture of non-normalized models. We further propose an extension of NCE with multiple noise distributions. Then, we point out that classification learning with deep neural networks is implicitly assuming an exponential family as a generative model. Based on this observation, we propose a method for clustering unlabeled data by estimating a finite mixture of distributions in an exponential family that is derived from the deep representation. 
Estimation of this mixture model is attained by using the proposed extensions of NCE with a particular choice of noise.
Finally, we apply the proposed method to image clustering, with promising if preliminary results.

\section{Background: non-normalized models and noise contrastive estimation}
In this section, we briefly review the problem of non-normalized models, and its solution by noise contrastive estimation \cite{Gutmann}. 
Suppose we have $N$ samples $x_1,\cdots,x_N$ from a parametric distribution
\begin{equation}
	p(x \mid \theta) = \frac{1}{Z(\theta)} \widetilde{p}(x \mid \theta), \label{nnp}
\end{equation}
where $\theta$ is an unknown parameter and $Z(\theta)$ is the normalization constant.
For several statistical models such as Markov random fields \cite{Li} and energy-based overcomplete ICA models \cite{Teh}, 
only the non-normalized density $\widetilde{p}(x \mid \theta)$ is given and the calculation of $Z(\theta)$ is intractable.
Thus, several methods have been developed to estimate $\theta$ without explicitly computing $Z(\theta)$.
They include pseudo-likelihood \cite{Besag}, contrastive divergence \cite{Hinton}, score matching \cite{SM}, and noise contrastive estimation \cite{Gutmann}.

In noise contrastive estimation (NCE), the non-normalized model is rewritten as
\begin{equation}
	\log p(x \mid \theta,c) = \log \widetilde{p} (x \mid \theta) + c, \label{NCEparam}
\end{equation}
where the scalar $c=-\log Z(\theta)$ is also viewed as an unknown parameter and estimated from data.
In addition to data $x_1,\cdots,x_N$, we generate $M$ noise samples $y_1,\cdots,y_M$ from a noise distribution $n(y)$.
The noise distribution should be difficult to discriminate from the real data, while having a tractable probability density function.
For example, $n(y)$ can be set to the Gaussian distribution with the same mean and covariance with data.
Then, the estimate of $(\theta, c)$ is defined by learning to discriminate between the data and the noise as accurately as possible:
\begin{equation}
	(\hat{\theta}_{{\rm NCE}},\hat{c}_{{\rm NCE}}) = {\rm arg} \max_{\theta,c} \hat{J}_{{\rm NCE}} (\theta,c), \label{NCEdef}
\end{equation}
where 
\begin{align}
	\hat{J}_{{\rm NCE}} (\theta,c) =& \sum_{t=1}^N \log \frac{N p(x_t \mid \theta,c)}{N p(x_t \mid \theta,c)+M n(x_t)} + \sum_{t=1}^M \log \frac{M n(y_t)}{N p(y_t \mid \theta,c)+M n(y_t)}. \label{Jdef}
\end{align}
The objective function $\hat{J}_{{\rm NCE}}$ is the log-likelihood of the logistic regression classifier. 
NCE has consistency and asymptotic normality under mild regularity conditions \cite{Gutmann12}.
Note that NCE is somewhat similar in spirit to Generative Adversarial Networks \cite{GAN}, 
which aim to generate realistic data by training a generative network and a discriminative network simultaneously.

\section{Mixture of non-normalized models, and extensions of NCE}
In this section, we first define the problem of non-normalized mixture models. 
Then we develop a general method for estimating a finite mixture of non-normalized models by extending NCE 
and discuss its application to clustering. 
We also investigate an extension of NCE with multiple noise distributions, which will be used in Section 5.

\subsection{Definition of a finite mixture of non-normalized models}

Suppose we have $N$ samples $x_1,\cdots,x_N$ from a finite mixture distribution
\begin{align}
	p (x \mid \theta,\pi) &= \sum_{k=1}^K \pi_k \cdot p_k (x \mid \theta_k), \label{mix_model}
\end{align}
where 
\begin{align}
	p_k(x \mid \theta_k) = \frac{1}{Z(\theta_k)} \widetilde{p}_k(x \mid \theta_k).
\end{align}
Here, $\theta=(\theta_1,\cdots,\theta_K)$ and $\pi=(\pi_1,\cdots,\pi_K)$ are unknown parameters 
and the normalization constant $Z(\theta_k)$ of each component $p_k (x \mid \theta_k)$ is intractable.
Existing methods for estimating non-normalized models are not applicable to \eqref{mix_model} since it includes more than one intractable normalization constant.
Although \cite{Nair} extended the contrastive divergence method to estimate a finite mixture of restricted Boltzmann machines, that is only a special case.

\subsection{NCE for estimation of mixture of non-normalized distributions}

Here, we extend NCE to estimate \eqref{mix_model} in general.
First, we reparametrize \eqref{mix_model} as
\begin{align}
	p (x \mid \theta,c) = \sum_{k=1}^K p_k(x \mid \theta_k,c_k), \label{NCEparam2}
\end{align}
where $c=(c_1,\cdots,c_K)$ with $c_k = \log \pi_k-\log Z(\theta_k)$ and each $p_k(x \mid \theta_k,c_k)$ is defined as
\begin{equation}
	\log p_k (x \mid \theta_k,c_k) = \log \widetilde{p}_k (x \mid \theta_k) + c_k.
\end{equation}
When $K=1$, this reparametrization reduces to that used in the original NCE in \eqref{NCEparam}.
Similarly to the original NCE, we consider $c$ as an additional unknown parameter. 
Then, we generate $M$ noise samples $y_1,\cdots,y_M$ from a noise distribution $n(y)$ 
and estimate $(\theta, c)$ in the same way as the original NCE in \eqref{NCEdef} and \eqref{Jdef},
that is, we use the definition \eqref{NCEparam2} in the original NCE objective function \eqref{Jdef}.
This estimator has consistency under mild regularity conditions similar to the original NCE (see Supplementary Material).
Note that the additional parameter $c_k$ incorporates both the mixture weight $\pi_k$ and the normalization constant $Z(\theta_k)$ 
and so we cannot obtain an estimate of $\pi_k$ from the estimate of $c_k$,
although it is not a problem for clustering application as shown in the next paragraph.

The estimation result can be used for clustering of $x_1,\cdots,x_N$. 
Specifically, by introducing a hidden variable $z$ taking values in $\{ 1,\cdots,K \}$, 
the mixture model \eqref{mix_model} is rewritten in a hierarchical form:
\begin{equation}
	p(z=k \mid \pi) = \pi_k \quad (k=1,\cdots,K),
\end{equation}
\begin{equation}
	p (x \mid z=k; \theta) = p_k (x \mid \theta_k).
\end{equation}
Then, the posterior of $z$ given $x$ is
\begin{equation}
	p(z=k \mid x; \theta, \pi) = \frac{\pi_k p_k(x \mid \theta_k)}{\sum_{j=1}^K \pi_j p_j(x \mid \theta_j)} \quad (k=1,\cdots,K).
\end{equation}
Thus, based on the posterior $p(z_t=k \mid x_t; \hat{\theta}, \hat{\pi})$ for each $x_t$, clustering of $x_1,\cdots,x_N$ is obtained.


\subsection{NCE with multiple noise distributions}

While the preceding subsection solves the problem of estimating non-normalized mixture models, we next introduce another  extension of NCE which is useful in further developments below.
In the original NCE, we generate noise samples from one noise distribution and discriminate between data and noise.
In order that such discrimination learns deep structure in the data, 
it would intuitively seem important that the noise distribution is as close as possible to the real data distribution.
Thus, it would be more efficient to use several noise distributions, 
since different noise distributions would accelerate to learn different kinds of data structure.
Here, we introduce NCE with multiple noise distributions and discuss its equivalence to the original NCE with a mixture noise distribution.

Suppose we have $N$ samples $x_1,\cdots,x_N$ from a non-normalized distribution \eqref{nnp} or a finite mixture of non-normalized distributions \eqref{mix_model}.
We consider $L$ noise distributions $n_1(y),\cdots,n_L(y)$ and generate $M_l$ noise samples $y^{(l)}_1,\cdots,y^{(l)}_{M_l}$ from each $n_l(y)$.
Then, similarly to the original NCE and its extension in Section 3.2, 
an estimate of $(\theta, c)$ can be defined by discriminating between $L+1$ classes (data, noise 1, $\cdots$, noise $L$) as correctly as possible:
\begin{align}
	(\hat{\theta}_{{\rm MNCE}},\hat{c}_{{\rm MNCE}}) = {\rm arg} \max_{\theta,c} \hat{J}_{{\rm MNCE}} (\theta,c), \label{MNCE}
\end{align}
where
\begin{align}
	\hat{J}_{{\rm MNCE}} (\theta,c) =& \sum_{t=1}^N \log \frac{N p(x_t \mid \theta,c)}{N p(x_t \mid \theta,c)+M_1 n_1(x_t)+\cdots+M_L n_L(x_t)} \nonumber \\
	&+ \sum_{l=1}^L \sum_{t=1}^{M_l} \log \frac{M_l n_l(y^{(l)}_t)}{N p(y^{(l)}_t \mid \theta,c)+M_1 n_1(y^{(l)}_t)+\cdots+M_L n_L(y^{(l)}_t)}.
\end{align}

On the other hand, we can regard $y^{(1)}_1,\cdots,y^{(1)}_{M_1},\cdots,y^{(L)}_1,\cdots,y^{(L)}_{M_L}$ as samples from the mixture distribution
\begin{align}
	n(y) = \sum_{l=1}^L \frac{M_l}{M_1+\cdots+M_L} n_l(y), \label{noise_mixture}
\end{align}
and use the original NCE $(\hat{\theta}_{{\rm NCE}},\hat{c}_{{\rm NCE}})$ as \eqref{NCEdef}.

In fact, these two estimators coincide as follows:
\begin{Theorem}\label{th_NCE}
\begin{align}
	(\hat{\theta}_{{\rm MNCE}},\hat{c}_{{\rm MNCE}}) = (\hat{\theta}_{{\rm NCE}},\hat{c}_{{\rm NCE}}).
\end{align}
\end{Theorem}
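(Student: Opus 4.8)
The plan is to show that the two objective functions $\hat{J}_{{\rm MNCE}}$ and $\hat{J}_{{\rm NCE}}$ coincide up to an additive constant that does not depend on the parameters $(\theta, c)$, from which equality of the maximizers follows immediately. This reduces the theorem to a term-by-term comparison of the two objectives rather than any asymptotic or optimization-theoretic argument.

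First I would record the basic identity implied by the definition of the mixture noise distribution \eqref{noise_mixture}: writing $M = M_1 + \cdots + M_L$, we have $M \, n(y) = \sum_{l=1}^L M_l n_l(y)$ for every $y$. Substituting this into the original NCE objective \eqref{Jdef} (evaluated on the pooled noise sample $y^{(1)}_1, \ldots, y^{(L)}_{M_L}$) turns the denominators $N p(\cdot \mid \theta, c) + M n(\cdot)$ into exactly the denominators $N p(\cdot \mid \theta, c) + \sum_{l=1}^L M_l n_l(\cdot)$ appearing in $\hat{J}_{{\rm MNCE}}$.

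Next I would compare the two objectives term by term. The data terms (the sums over $t = 1, \ldots, N$) become literally identical after the substitution, since both numerators equal $N p(x_t \mid \theta, c)$ and the denominators now agree. For the noise terms, reindexing the pooled sum in $\hat{J}_{{\rm NCE}}$ by the pair $(l, t)$ shows that the denominators again agree, while the numerators differ: $\hat{J}_{{\rm MNCE}}$ carries $M_l n_l(y^{(l)}_t)$ whereas $\hat{J}_{{\rm NCE}}$ carries $M n(y^{(l)}_t) = \sum_{j=1}^L M_j n_j(y^{(l)}_t)$. The difference of the two logarithms is therefore $\log \frac{M_l n_l(y^{(l)}_t)}{\sum_{j=1}^L M_j n_j(y^{(l)}_t)}$, which depends only on the fixed noise samples and noise densities, not on $(\theta, c)$.

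Summing this difference over all $M$ noise samples gives $\hat{J}_{{\rm MNCE}}(\theta, c) = \hat{J}_{{\rm NCE}}(\theta, c) + C$, where $C$ is a constant independent of $(\theta, c)$. Since adding a parameter-free constant does not move the location of a maximum, the two optimization problems \eqref{MNCE} and \eqref{NCEdef} have the same solution, which is precisely the claimed identity. I do not expect a serious obstacle here; the only point requiring care is recognizing that one need not prove the two objective functions equal --- only equal up to an additive constant in the optimization variables --- so that the mismatch between the noise-term numerators is harmless.
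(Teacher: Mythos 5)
Your proof is correct and follows essentially the same route as the paper: both arguments rest on the identity $M\,n(y)=\sum_{l=1}^L M_l n_l(y)$ together with the observation that the noise-term numerators are parameter-free, the only cosmetic difference being that the paper discards those numerators from each objective separately (obtaining a common function $\widetilde{J}$) while you keep them and show the two objectives differ by an additive constant in $(\theta,c)$.
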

The proof is given in Supplementary Material. From Theorem \ref{th_NCE}, NCE with multiple noise distributions has the same statistical properties with the original NCE.
We will present simulation results for a typical situation where using multiple noise distributions is beneficial in Section 6.

\section{Exponential family and classification with neural networks}
In this section, we lay the ground for applying the preceding developments to deep neural networks. 
We propose an interpretation where an exponential family is implicitly assumed as a generative model in classification learning with neural networks.
Such an interpretation was also pointed out by \cite{Dai,Xie}.
We consider image classification for convenience of terminology.

Let $x$ be image data and $z$ be its category. 
We assume that $z$ takes values in $\{ 1,\cdots,L \}$.
In classification with neural networks, the softmax function is commonly used in the output layer.
Namely, the probability output is computed by 
\begin{equation}
	p(z=l \mid x) = \frac{\exp(\sum_{i=1}^d w_{li} f_i(x))}{\sum_{j=1}^L \exp(\sum_{i=1}^d w_{ji} f_i(x))} \quad (l=1,\cdots,L), \label{deep_softmax}
\end{equation}
where $d$ is the number of units in the last hidden layer, 
$f_i(x)$ is the activation of the $i$-th unit in the last hidden layer when $x$ is input to this network, 
and $w_{ji}$ is the connection weight between the $i$-th unit in the last hidden layer and the $j$-th output unit.
Thus, neural networks learn to extract nonlinear features $f_1,\cdots,f_d$ that are useful for image classification.

From \eqref{deep_softmax}, we obtain
\begin{equation}
	\frac{p(z=l \mid x)}{p(z=1 \mid x)} = \exp \left( \sum_{i=1}^d (w_{li}-w_{1i}) f_i(x) \right) \quad (l=1,\cdots,L). \label{softmax_ratio}
\end{equation}
On the other hand, from Bayes' formula, we obtain
\begin{equation}
	\frac{p(z=l \mid x)}{p(z=1 \mid x)} = \frac{p(z=l) p(x \mid z=l)}{p(z=1) p(x \mid z=1)} \quad (l=1,\cdots,L), \label{bayes_formula}
\end{equation}
where the prior probability $p(z)$ is defined from the proportion of each category in the training data.
Therefore, \eqref{softmax_ratio} and \eqref{bayes_formula} lead to
\begin{equation}
	p (x \mid z=l) = p(x \mid z=1) \frac{p(z=1)}{p(z=l)} \exp \left( \sum_{i=1}^d (w_{li}-w_{1i}) f_i(x) \right). \label{pxzk}
\end{equation}

Now, consider an exponential family
\begin{equation}
	p (x \mid \theta) = h(x) \exp \left( \sum_{i=1}^d \theta_{i} f_i(x) - A(\theta) \right), \label{gen_model}
\end{equation}
where 
\begin{equation}
	h(x) = p(x \mid z=1) \exp \left( -\sum_{i=1}^d w_{1i} f_i(x) \right). \label{h_def}
\end{equation}
Then, from \eqref{pxzk}, the distribution of images in the $l$-th category belongs to this exponential family with $\theta_i=w_{li}$ and $A(\theta)=\log p(z=l)-\log p(z=1)$.
Thus, classification with neural networks \eqref{deep_softmax} implicitly assumes the exponential family \eqref{gen_model} as a generative model.

For image data, many pretrained networks are publicly available such as AlexNet \cite{Krizhevsky} and inception-v3 \cite{Szegedy}.
Although these networks were trained for ImageNet competition, they have learned a useful representation of general natural images.
Indeed, they work well empirically as feature extractors for other image data.
Therefore, the distributional assumption \eqref{gen_model} seems to be reasonable even for image categories outside of the ImageNet competition.


\section{Clustering with deep representation}
In this section, we combine the developments above to finally provide a method for transferring the representation of a deep neural network to clustering of unlabeled data, 
using the extensions of NCE proposed in Section 3 and the exponential family introduced in Section 4.
In the current state of research, it seems that the only way to employ the deep representation for clustering is to heuristically apply conventional clustering algorithms to the feature vectors.
Here, we provide a probabilistically principled clustering method that leverages the deep representation.
Again, for concreteness of exposition, we consider image clustering, although the method is quite general.

Suppose we have $N$ images $x_1,\cdots,x_N$ and a neural network previously trained (``pretrained") on some other image dataset (e.g., AlexNet, inception-v3).
We assume that $x_1,\cdots,x_N$ belongs to the same exponential family \eqref{gen_model} with the image data on which the network was pretrained, in other words, the difference is only in the last layer weights.
Then, the generative model of $x_1,\cdots,x_N$ is a finite mixture of distributions in the same exponential family:
\begin{align}
	p (x \mid \theta,\pi) &= \sum_{k=1}^K \pi_k \cdot h(x) \exp \left( \sum_{i=1}^d \theta_{ki} f_i(x) - A(\theta_k) \right), \label{EFMt}
\end{align}
where $K$ is the number of image categories in $x_1,\cdots,x_N$.
Note that $A(\theta_k)$ here are not known and intractable, although they were known for the categories used in training.
Like \eqref{NCEparam2}, we reparametrize \eqref{EFMt} as
\begin{align}
	p (x \mid \theta,c) = h(x) \sum_{k=1}^K \exp \left( \sum_{i=1}^d \theta_{ki} f_i(x) + c_k \right), \label{EFMt2}
\end{align}
where $c=(c_1,\cdots,c_K)$.
From \eqref{h_def}, the function $h$ is a function of the distribution of one image category $p(x \mid z=1)$ and so it is totally unknown.
Yet, clustering of $x_1,\cdots,x_N$ is possible if we can estimate $\theta$ and $c$, 
since the function $h$ cancels out in the posterior:
\begin{equation}
	p(z=k \mid x; \theta, c) = \frac{\exp \left( \sum_{i=1}^d \theta_{ki} f_i(x) + c_k \right)}{\sum_{j=1}^K \exp \left( \sum_{i=1}^d \theta_{ji} f_i(x) + c_j \right)} \quad (k=1,\cdots,K). \label{EFMpos}
\end{equation}

We use the NCE extensions in Section 3 to estimate $\theta$ and $c$ in \eqref{EFMt2}.
Here, we have to be careful in the choice of the noise distribution because of the unknown function $h$ in \eqref{EFMt2}.
If we generate noise samples artificially, $h$ remains in the objective function of NCE \eqref{Jdef} and so the optimization is impossible.
To get rid of $h$, we propose here to use the original training data of the pretrained network as noise samples.
Specifically, let $\widetilde{x}^{(1)}_1,\cdots,\widetilde{x}^{(1)}_{M_1},\cdots,\widetilde{x}^{(L)}_1,\cdots,\widetilde{x}^{(L)}_{M_L}$ be the training data of the pretrained network,
where $L$ is the number of categories and $M_l$ is the number of samples in the $l$-th category.
Then, the prior probability is $p(z=l) = M_l/M$ where $M=M_1+\cdots+M_L$.
Therefore, from \eqref{pxzk} and \eqref{h_def}, the distribution of images in the $l$-th pre-training category (here used as noise) is
\begin{equation}
	q_l (\widetilde{x}) = h(\widetilde{x}) \frac{M_1}{M_l} \exp \left( \sum_{i=1}^d w_{li} f_i(\widetilde{x}) \right) \quad (l=1,\cdots,L). \label{noise_dist}
\end{equation}
Thus, we regard $q_1,\cdots,q_L$ as noise distributions and the training data $\widetilde{x}^{(l)}_1,\cdots,\widetilde{x}^{(l)}_{M_l}$ as samples from $q_l$ for $l=1,\cdots,L$, respectively\footnote{In practice, using only categories relevant to the new data may suffice and it reduces computational cost.}.

In summary, the estimate of $(\theta,c)$ is given by
\begin{align}
	(\hat{\theta}_{{\rm MNCE}},\hat{c}_{{\rm MNCE}}) = {\rm arg} \max_{\theta,c} \hat{J}_{{\rm MNCE}} (\theta,c), \label{final_est}
\end{align}
where
\begin{align}
	&\hat{J}_{{\rm MNCE}} (\theta,c) = \sum_{t=1}^N \log \frac{N \sum_{k=1}^K \exp \left( \sum_{i=1}^d \theta_{ki} f_i(x_t) + c_k \right)}{N \sum_{k=1}^K \exp \left( \sum_{i=1}^d \theta_{ki} f_i(x_t) + c_k \right) + M_1 \sum_{l=1}^L \exp \left( \sum_{i=1}^d w_{l i} f_i(x_t) \right)} \nonumber \\
	&+ \sum_{l=1}^L \sum_{t=1}^{M_l} \log \frac{M_1 \exp \left( \sum_{i=1}^d w_{l i} f_i(\widetilde{x}^{(l)}_t) \right)}{N \sum_{k=1}^K \exp \left( \sum_{i=1}^d \theta_{ki} f_i(\widetilde{x}^{(l)}_t) + c_k \right) + M_1 \sum_{l=1}^L \exp \left( \sum_{i=1}^d w_{l i} f_i(\widetilde{x}^{(l)}_t) \right)}.
\end{align}
Note that $h$ cancels out in $\hat{J}_{{\rm MNCE}}$, and so the objective function only depends on quantities we can readily compute.
Using the estimate \eqref{final_est}, clustering of $x_1,\cdots,x_N$ is obtained by the posterior \eqref{EFMpos}.

\section{Simulation results}
In this section, we use simulations to further confirm the validity of the estimation of non-normalized mixture models by extensions of NCE proposed in Section 3.
As a special case of finite mixture models \eqref{NCEparam2}, we consider the one-dimensional Gaussian mixture distribution.
Namely,
\begin{equation}
	p(x \mid \theta,c) = \sum_{k=1}^K \exp (\theta_{k1} x^2 + \theta_{k2} x + c_k). \label{gm_def}
\end{equation}
where we pretend not to be able to compute the normalization constants for the purpose of this simulation.
We generated $N$ samples $x_1,\cdots,x_N$ from the two-component Gaussian mixture distribution $0.5 \cdot {\rm N} (0,1) + 0.5 \cdot {\rm N} (4,1)$.
The sample size $N$ was set to $2^9,2^{10},\cdots,2^{18}$ and the simulation was repeated 100 times for each sample size.

We consider two estimation methods, both of which are based on the proposed extensions of NCE. 
The first method is NCE with $M=N$ noise samples generated from the Gaussian distribution ${\rm N} (2,5)$, which has the same mean and variance with the true data-generating distribution $0.5 \cdot {\rm N} (0,1) + 0.5 \cdot {\rm N} (4,1)$.
The second method is NCE with $M_1=M_2=N/2$ noise samples generated from two Gaussian distributions ${\rm N} (0,1)$ and ${\rm N} (4,1)$.
We solved the optimization \eqref{NCEdef} in NCE by the nonlinear conjugate gradient method \cite{Rasmussen}.

Figure \ref{fig_mse} plots the median of the squared errors for $\theta$ and $c$ of each estimation method with respect to the sample size $N$.
Here, among the two estimated components that are non-normalized Gaussian distributions, we regarded the one with the smaller mean as the first component $p_1(x \mid \theta_1,c_1)$.
For $\theta$, we also plot the median of the squared error of the maximum likelihood estimator computed by the MATLAB function \textit{fitgmdist}.
The estimation errors converge to zero for both $\theta$ and $c$, which provides evidence for the consistency of NCE extensions.
Also, the estimation accuracy of the second method is slightly better than that of the first method, which is understood as follows.
From Theorem 1, the second method is equivalent to NCE with the noise distribution equal to the true data-generating distribution.
Therefore, noise in the second method is more difficult to discriminate from data than in the first method.

\begin{figure}
\begin{minipage}{0.45\textwidth}
(a)
	\begin{center}
	\includegraphics[width=6cm]{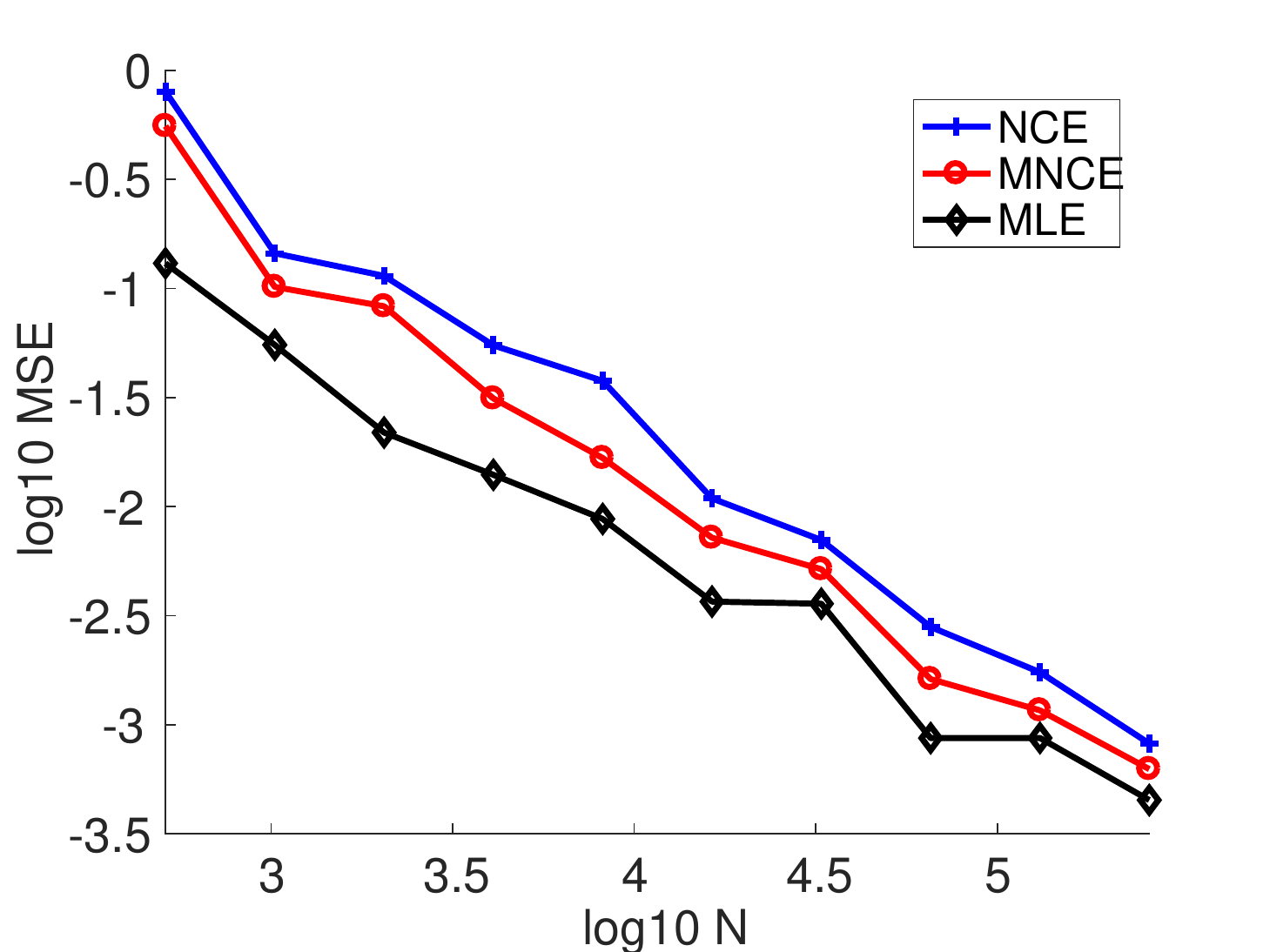}
	\end{center}
\end{minipage}
\begin{minipage}{0.45\textwidth}
(b)
	\begin{center}
	\includegraphics[width=6cm]{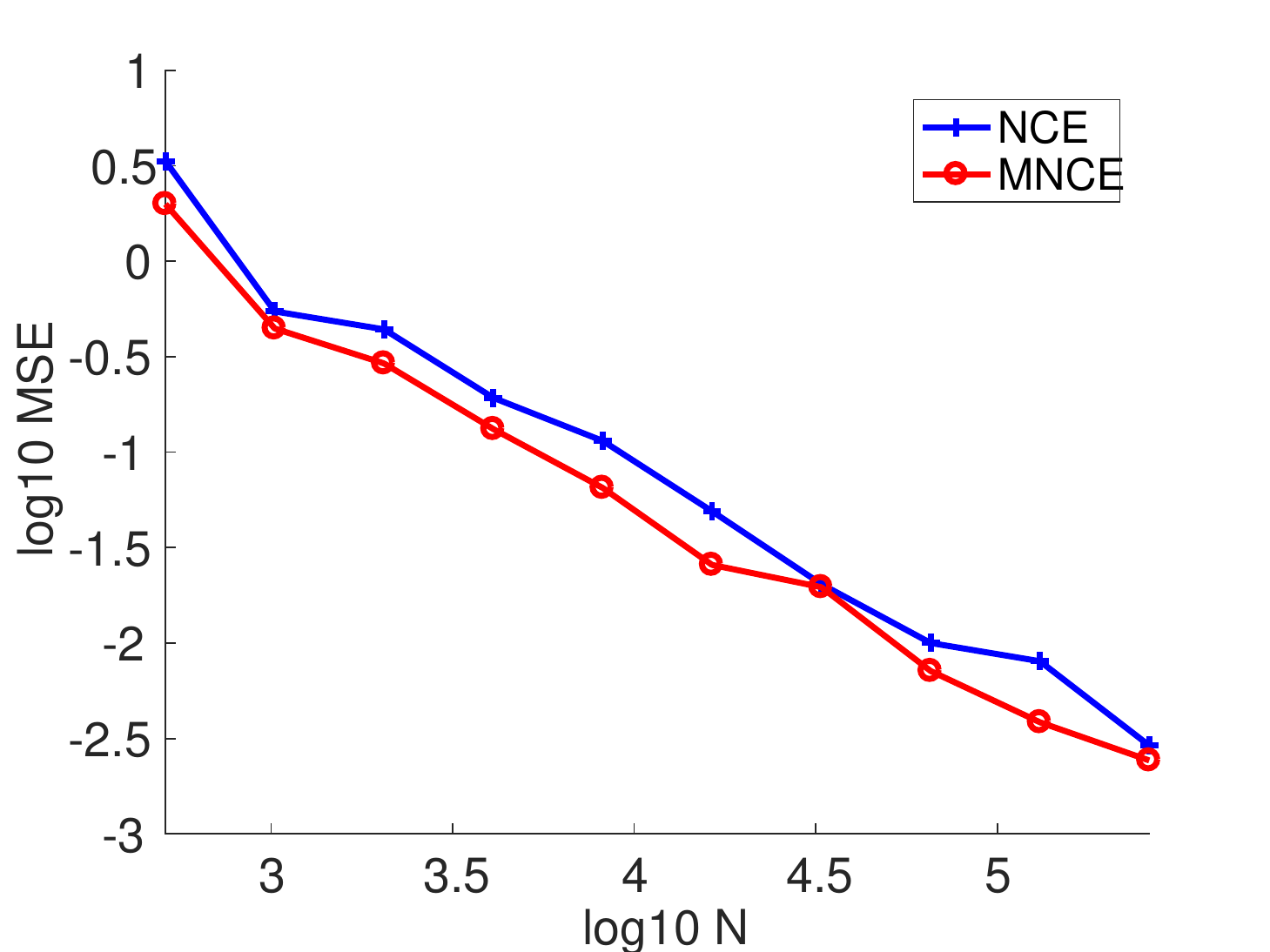}
	\end{center}
\end{minipage}
	\caption{Estimation errors for (a) $\theta$ and (b) $c$ in the Gaussian mixture distribution \eqref{gm_def}.}
	\label{fig_mse}
\end{figure}

\section{Application to real data}
In this section, we apply the proposed method to image clustering.
We use the training data of ``Dogs vs. Cats" competition at kaggle\footnote{https://www.kaggle.com/c/dogs-vs-cats} ($N=25000$), which consists of 12500 dog images and 12500 cat images.
As a pretrained network, we use inception-v3 \cite{Szegedy}, which extracts a $d=2048$ dimensional feature vector from image data.
This network was trained for ImageNet competition.
For noise samples, we use canine and feline images in the training data of inception-v3\footnote{From the 152-th category ``Chihuahua" to the 300-th category ``meerkat". We use only color images.} ($M=186125$, $L=149$).
We set the number of clusters to $K=2$.

We solved the optimization \eqref{NCEdef} in NCE by the nonlinear conjugate gradient method \cite{Rasmussen} with 10 random initial values of $(\theta,c)$. 
Among 10 converged solutions, we picked the one with the maximum value of objective function $\hat{J}_{{\rm MNCE}}$.

For comparison, we fitted the two-component Gaussian mixture model with diagonal covariance matrices to the feature vectors of $N$ images by using the MATLAB function \textit{fitgmdist}.
We also fitted the two-component Gaussian mixture model with isotropic covariance matrices by EM algorithm.
Although these models also provide clustering, it is heuristic and not probabilistically rigorous.

Figure \ref{fig_hist} shows the histogram of the posterior probability in the first cluster $p(z=1 \mid x; \hat{\theta},\hat{c})$.
Since the posterior takes values close to zero or one, almost all images are classified with high confidence.
Figure \ref{fig_hist2} shows the histogram of the logit score of the posterior probability in the first cluster $\log p(z=1 \mid x; \hat{\theta},\hat{c}) - \log (1-p(z=1 \mid x; \hat{\theta},\hat{c}))$.
Compared to the proposed method, the Gaussian mixture models assign extremely large or small logit scores 
and so it seems to fail to quantify the classification uncertainty properly.

\begin{figure}
\begin{minipage}{0.3\textwidth}
(a)
	\begin{center}
	\includegraphics[width=4cm]{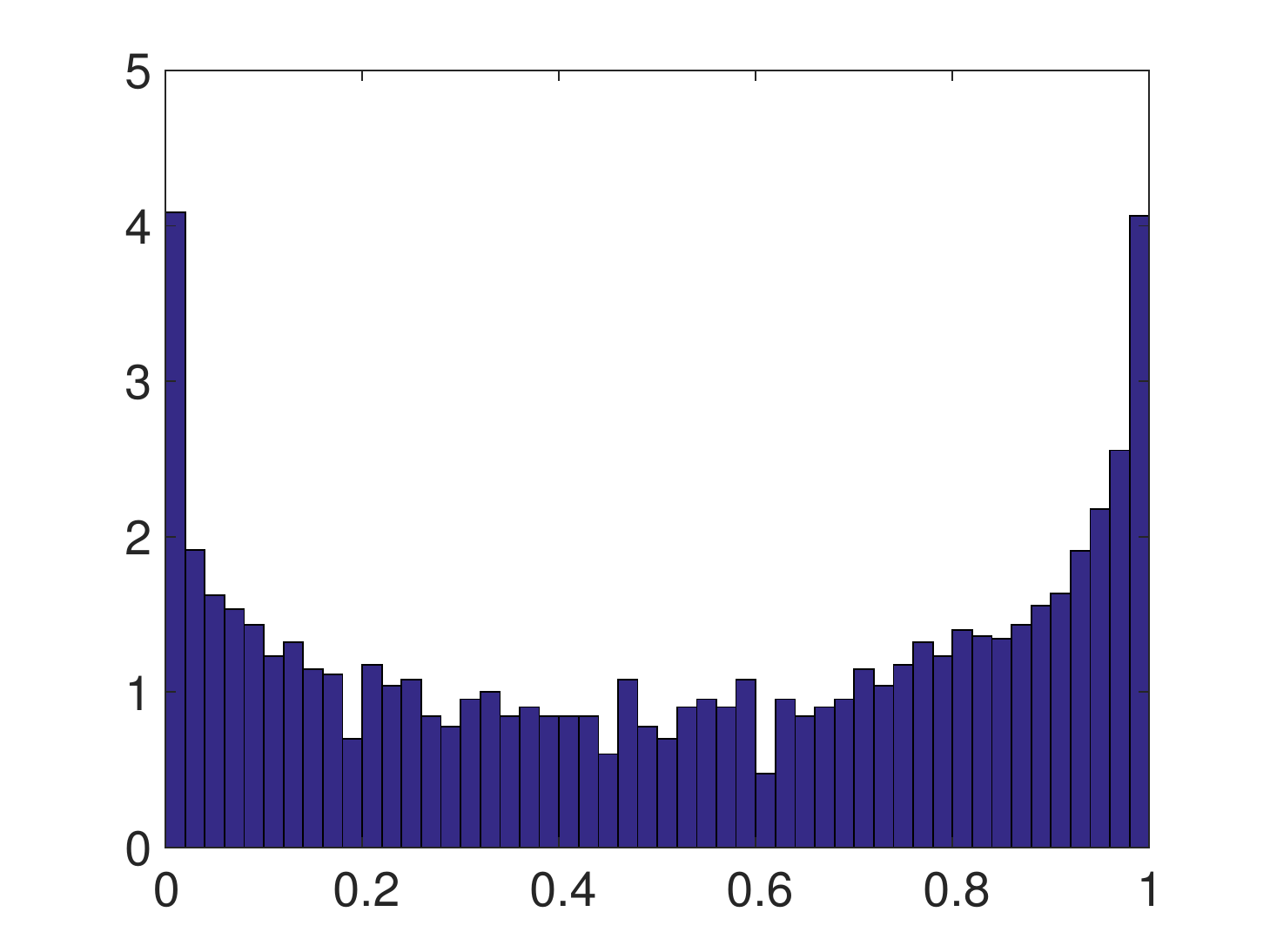}
	\end{center}
\end{minipage}
\begin{minipage}{0.3\textwidth}
(b)
	\begin{center}
	\includegraphics[width=4cm]{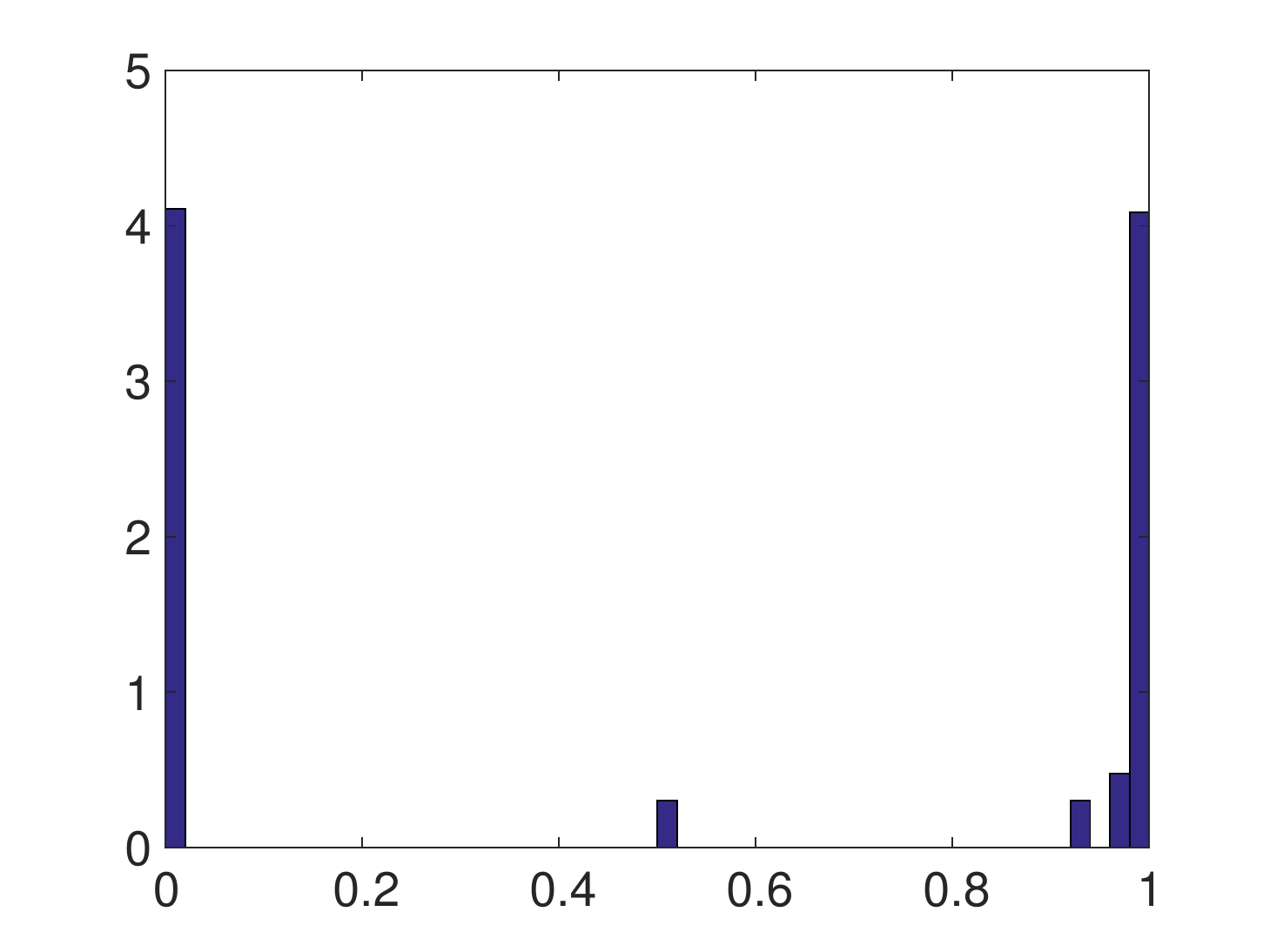}
	\end{center}
\end{minipage}
\begin{minipage}{0.3\textwidth}
(c)
	\begin{center}
	\includegraphics[width=4cm]{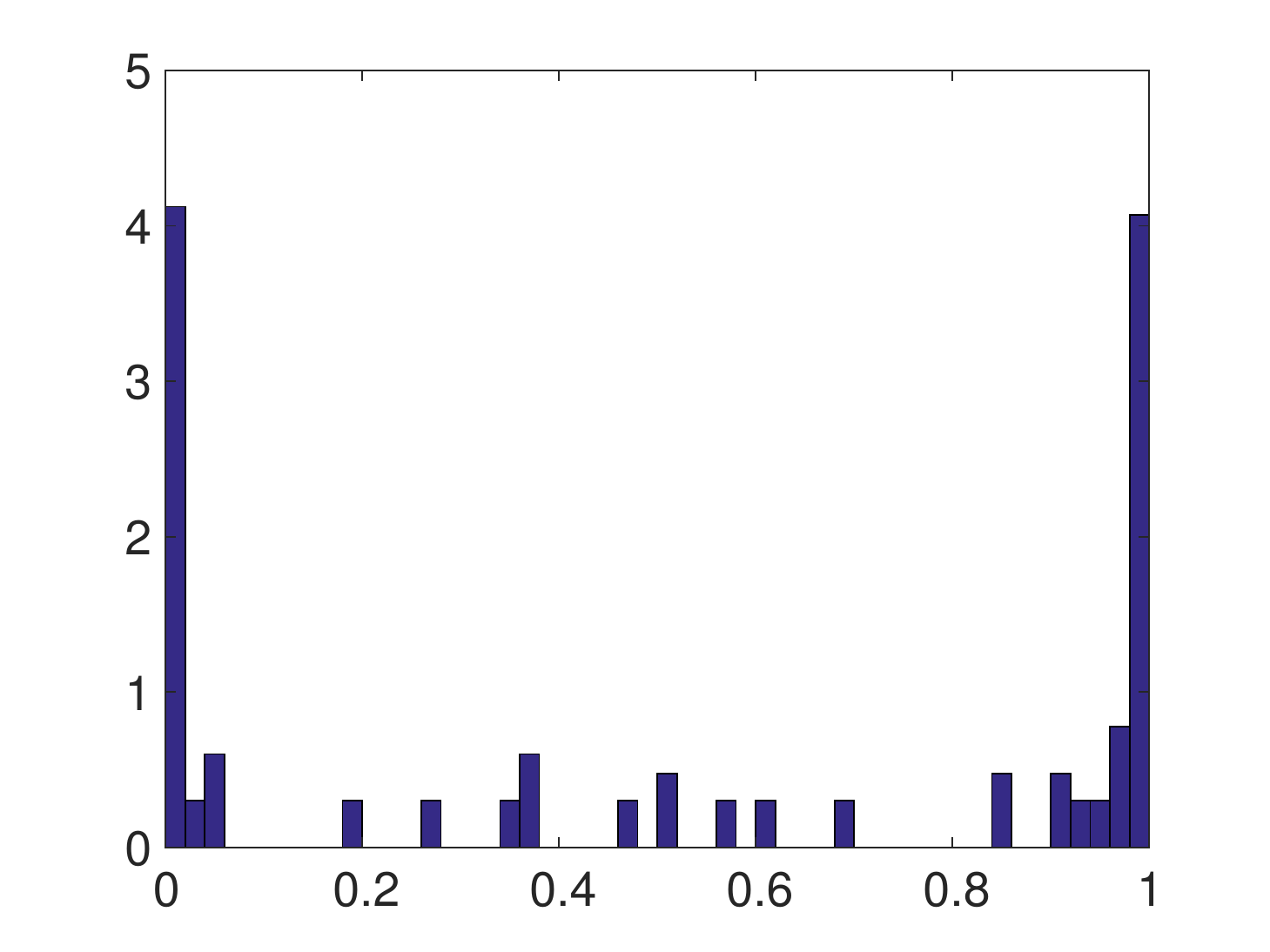}
	\end{center}
\end{minipage}
	\caption{Histogram of the posterior probability. The y-axis is in scale $\log_{10} (1+y)$, where $y$ is the frequency. (a) The proposed method. (b) Gaussian mixture model with diagonal covariance matrices. (c) Gaussian mixture model with isotropic covariance matrices.}
	\label{fig_hist}
\end{figure}

\begin{figure}
\begin{minipage}{0.3\textwidth}
(a)
	\begin{center}
	\includegraphics[width=4cm]{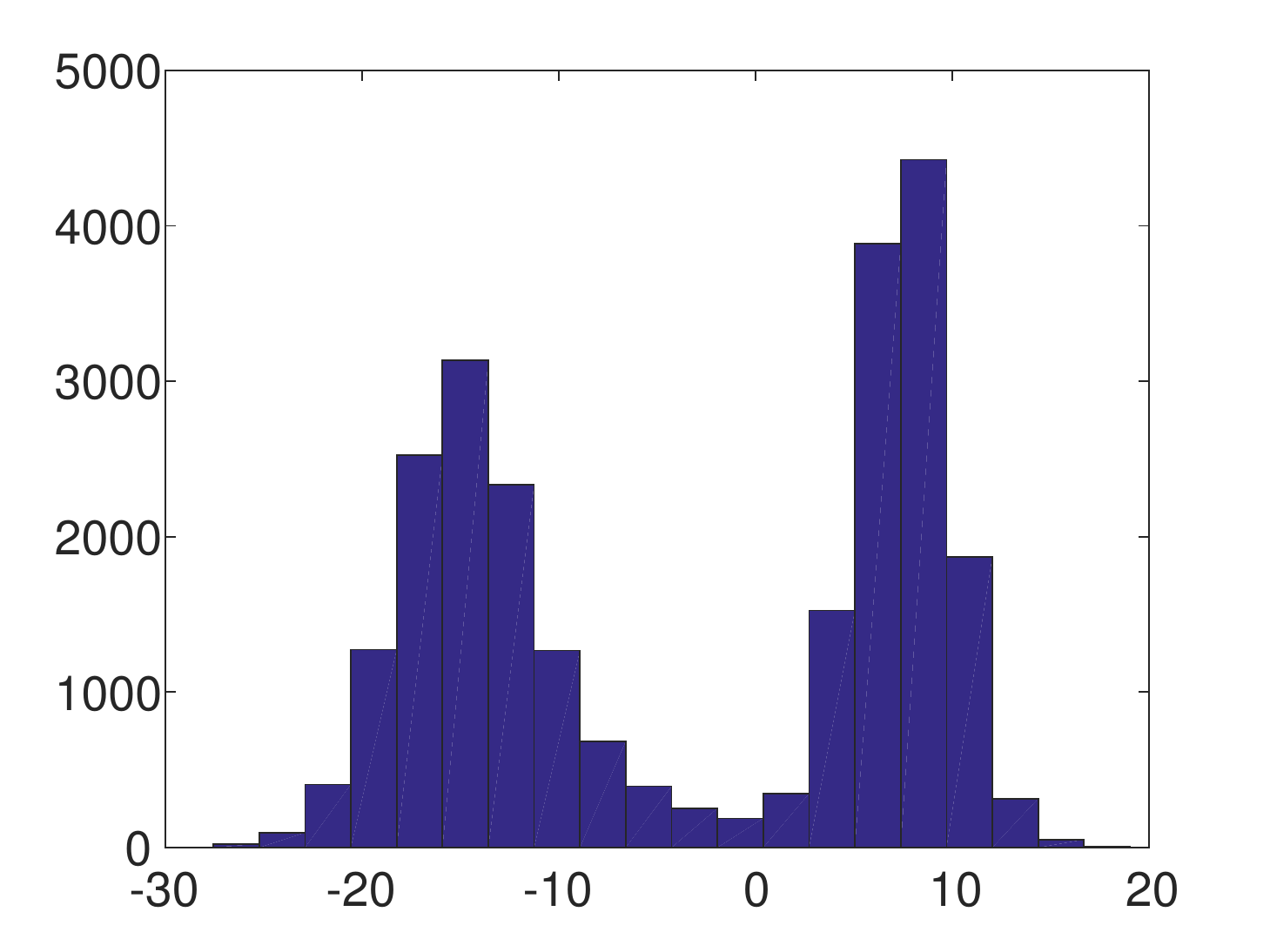}
	\end{center}
\end{minipage}
\begin{minipage}{0.3\textwidth}
(b)
	\begin{center}
	\includegraphics[width=4cm]{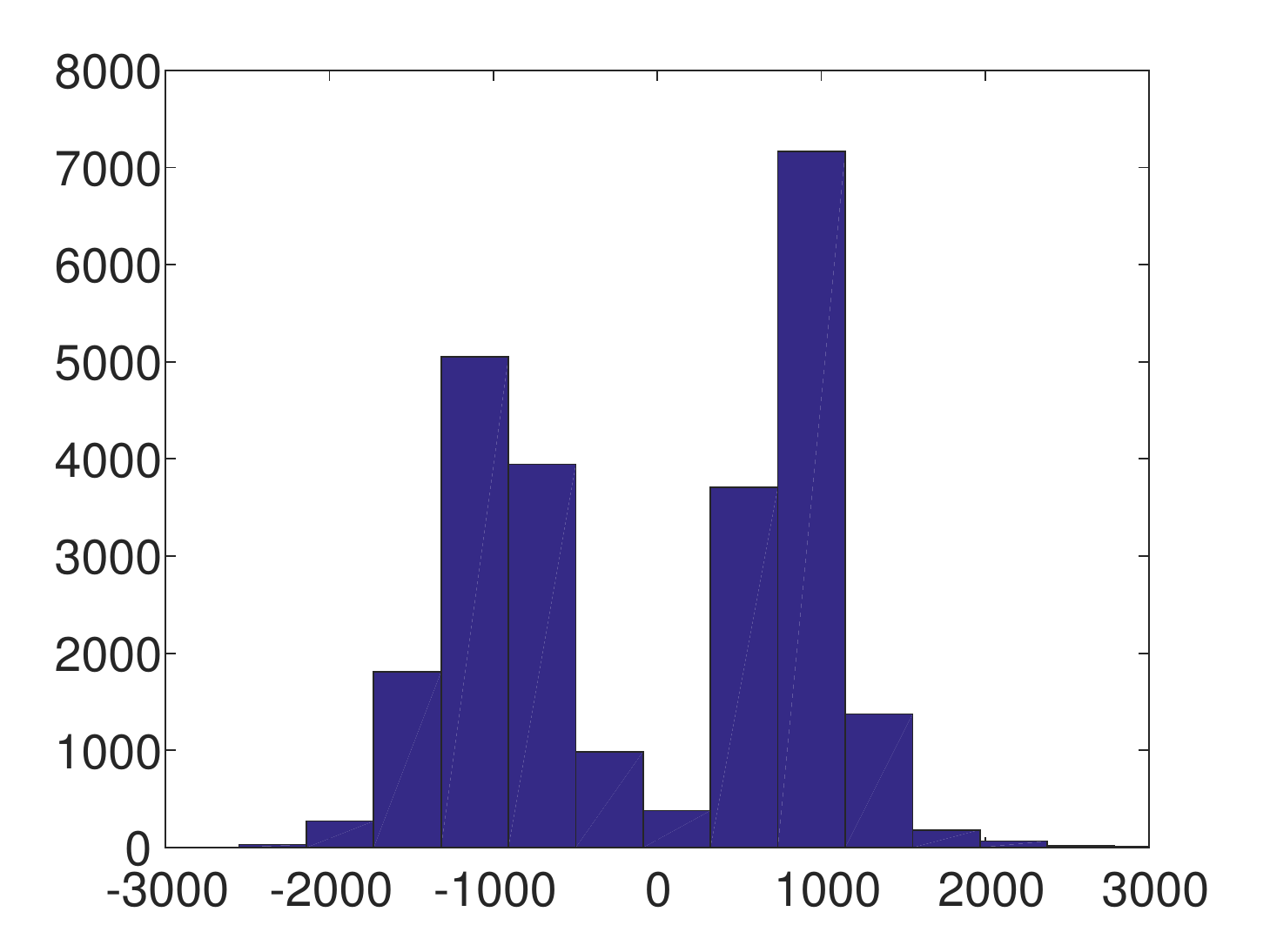}
	\end{center}
\end{minipage}
\begin{minipage}{0.3\textwidth}
(c)
	\begin{center}
	\includegraphics[width=4cm]{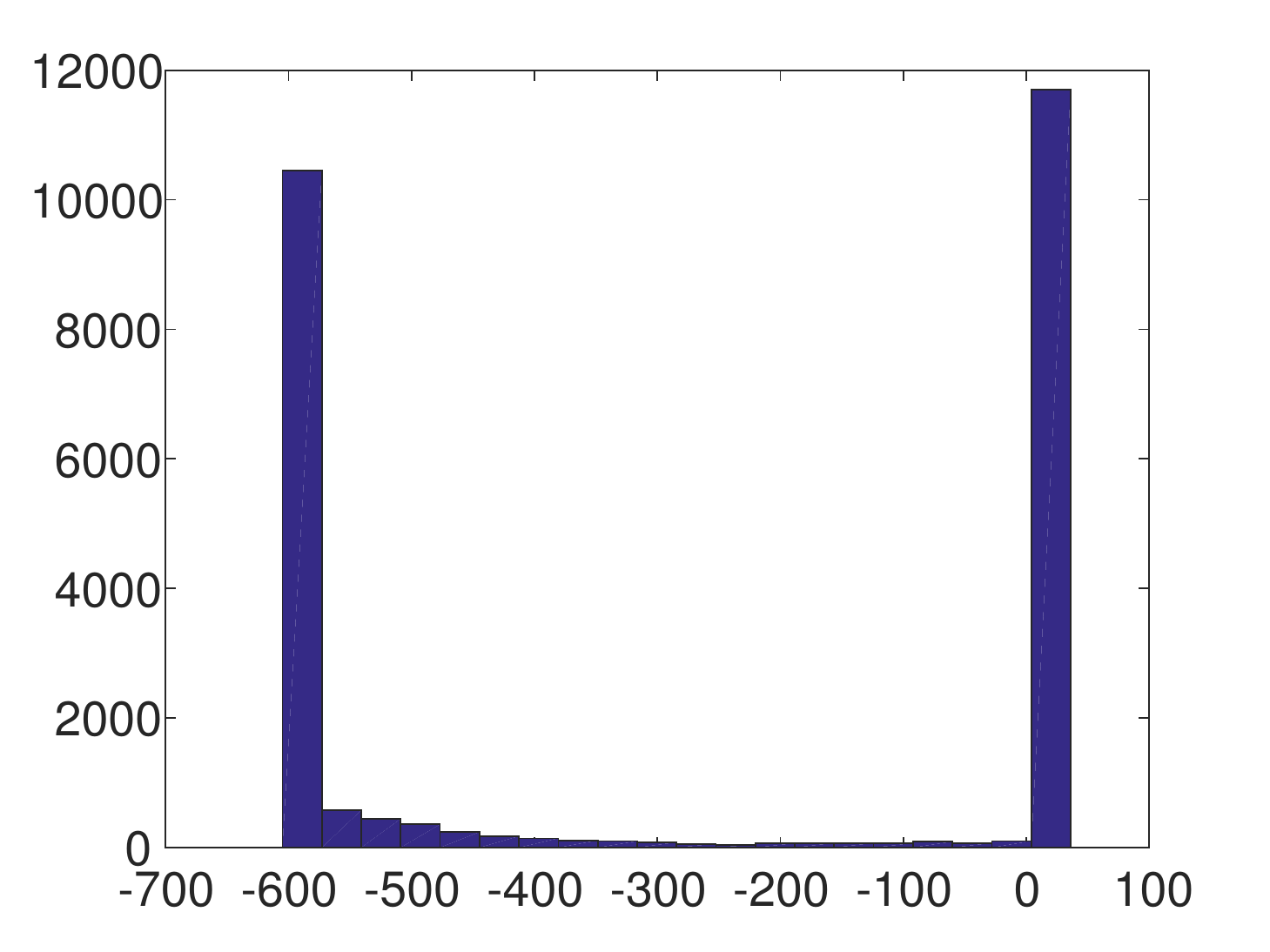}
	\end{center}
\end{minipage}
	\caption{Histogram of the logit score of posterior probability. Please note different horizontal ranges in the three plots. (a) The proposed method. (b) Gaussian mixture model with diagonal covariance matrices. (c) Gaussian mixture model with isotropic covariance matrices.}
	\label{fig_hist2}
\end{figure}

Table \ref{tab_dogcat} shows the clustering result.
Here, we classify an image $x$ into cluster $k$ if $p(z=k \mid x; \hat{\theta},\hat{c})>0.5$.
In all methods, the two clusters seem to separate dogs and cats well,
although the training of inception-v3 was done with more detailed categories like ``Scotch terrier" or ``snow leopard."
The proposed method has better classification accuracy compared to the Gaussian mixture models.

\begin{table}[htbp]
	\caption{Image clustering result. (a) The proposed method. (b) Gaussian mixture model with diagonal covariance matrices. (c) Gaussian mixture model with isotropic covariance matrices.}
	\label{tab_dogcat}
\begin{minipage}{0.3\textwidth}
	\begin{center}
	\begin{tabular}{|c|c|c|}
	\hline
	 a) & dog & cat \\ \hline
	cluster 1 & 12400 & 145 \\ \hline
	cluster 2 & 100 & 12355 \\ \hline
	\end{tabular}
	\end{center}
\end{minipage}
\begin{minipage}{0.3\textwidth}
	\begin{center}
	\begin{tabular}{|c|c|c|}
	\hline
	 b) & dog & cat \\ \hline
	cluster 1 & 12490 & 325 \\ \hline
	cluster 2 & 10 & 12175 \\ \hline
	\end{tabular}
	\end{center}
\end{minipage}
\begin{minipage}{0.3\textwidth}
	\begin{center}
	\begin{tabular}{|c|c|c|}
	\hline
	 c) & dog & cat \\ \hline
	cluster 1 & 12490 & 792 \\ \hline
	cluster 2 & 10 & 11708 \\ \hline
	\end{tabular}
	\end{center}
\end{minipage}
\end{table}

\section{Conclusion}
We extended noise contrastive estimation (NCE) to estimate a finite mixture of non-normalized models, and investigated NCE with multiple noise distributions.
Both theory and simulation results showed the validity of these extensions of NCE.

Based on the extended NCE, we proposed a method for clustering unlabeled data by using deep representation.
The clustering is attained by estimating a finite mixture of distributions in an exponential family; such a model is in fact implicitly assumed as a generative model in classification learning with neural networks.
For estimation, we use NCE in which the original training data of the neural network are used as noise.
Application to image clustering gave promising results on a simple task.

Although we considered image clustering here, the proposed method can be applied on other kinds of data, e.g., neuroimaging data.
By using the deep representation obtained by recently developed nonlinear ICA methods \cite{TCL,PCL}, 
clustering of brain states may be attained in an across-subject transfer setting, which is important in Brain Machine Interface (BMI) applications.


\clearpage

\clearpage

\begin{center}
\Large{Supplementary Material}
\end{center}

\section*{Consistency of the extended NCE (Section 3.2)}

Here, we consider the non-normalized mixture model
\begin{align}
	p (x \mid \theta,\pi) &= \sum_{k=1}^K \pi_k \cdot p (x \mid \theta_k), \label{mix_model_supp}
\end{align}
where the $K$ components belong to the same parametric family:
\begin{align}
	p (x \mid \theta_k) = \frac{1}{Z(\theta_k)} \widetilde{p}(x \mid \theta_k). \label{model_supp}
\end{align}
We also use the parametrization with $(\theta,c)$ defined by
\begin{align}
	p (x \mid \theta,c) = \sum_{k=1}^K p(x \mid \theta_k,c_k),
\end{align}
where
\begin{equation}
	\log p (x \mid \theta_k,c_k) = \log \widetilde{p} (x \mid \theta_k) + c_k.
\end{equation}
Two parametrizations are connected by the transformation $c_k = \log \pi_k-\log Z(\theta_k)$.

Suppose we have $N$ samples $x_1,\cdots,x_N$ from $p(x \mid \theta^{*},c^{*})$.
We consider the asymptotics where $N \to \infty$ with $\nu := M/N$ fixed, which is the same setting with Gutmann and Hyv\"arinen (2012).
Let
\begin{align}
	J_{{\rm NCE}} (\theta,c) =& \int p(x \mid \theta^{*},c^*) \log \frac{N p(x \mid \theta,c)}{N p(x \mid \theta,c) + M n(x)} {\rm d} x \nonumber \\
	&+ \nu \int n(y) \log \frac{M n(y)}{N p(y \mid \theta,c) + M n(y)} {\rm d} y.
\end{align}
Then, we obtain the following.

\begin{Lemma}
Assume the following. 
\begin{itemize}
\item[(a)] The set $\{ p(x \mid \theta) \mid \theta \in \Theta \}$ is linearly independent, where $\Theta$ is the parameter space of \eqref{model_supp}.

\item[(b)] The parameters $\theta^{*}_1,\cdots,\theta^{*}_K$ are all different.

\item[(c)] The parameters $\pi^{*}_1,\cdots,\pi^{*}_K$ are all nonzero.

\item[(d)] $n(x)$ is nonzero whenever $p(x \mid \theta^*,c^*)$ is nonzero.
\end{itemize}
Then, 
\begin{equation}
	{\rm arg} \max_{\theta,c} J_{{\rm NCE}} (\theta,c) = \{ (\theta^*_{\sigma(1)},\cdots,\theta^*_{\sigma(K)},c^*_{\sigma(1)},\cdots,c^*_{\sigma(K)}) \mid \sigma \in S_n \}, \label{lem_eq}
\end{equation}
where $S_n$ is the set of all permutations of $\{ 1,\cdots,K \}$.
\end{Lemma}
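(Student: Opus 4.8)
The plan is to decompose the proof into a population-level optimality step (any maximizer must reproduce the true data density) and a mixture-identifiability step (reproducing the density pins down the parameters up to relabeling). First I would note that $J_{{\rm NCE}}(\theta,c)$ depends on $(\theta,c)$ only through the function $p_m(\cdot)=p(\cdot\mid\theta,c)$, and analyze the integrand pointwise. Writing $\nu=M/N$ and $p^*=p(\cdot\mid\theta^*,c^*)$, the integrand at a point $x$ with $n(x)>0$, viewed as a function of the value $v=p_m(x)>0$, is $g_x(v)=p^*(x)\log\frac{v}{v+\nu n(x)}+\nu n(x)\log\frac{\nu n(x)}{v+\nu n(x)}$, and a direct computation gives
\begin{equation}
g_x'(v)=\frac{\nu n(x)}{v+\nu n(x)}\left(\frac{p^*(x)}{v}-1\right).
\end{equation}
Since the prefactor is positive, $g_x'(v)$ is positive for $v<p^*(x)$ and negative for $v>p^*(x)$, so $g_x$ is unimodal with unique maximizer $v=p^*(x)$. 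Integrating over $x$ yields $J_{{\rm NCE}}(\theta,c)\le J_{{\rm NCE}}(\theta^*,c^*)$, with equality exactly when $p(\cdot\mid\theta,c)=p^*$ wherever $n>0$; by assumption (d) this set contains the support of $p^*$, and since $p^*$ itself lies in the model class the maximum is attained. This is the mixture analogue of the Gutmann--Hyv\"arinen optimality argument and uses only (d).

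Second, I would convert the density identity into a statement about parameters. Using $\widetilde{p}(x\mid\theta_k)=Z(\theta_k)\,p(x\mid\theta_k)$ and $e^{c_k^*}Z(\theta_k^*)=\pi_k^*$, the identity from the first step becomes $\sum_{k=1}^K a_k\,p(x\mid\theta_k)=\sum_{k=1}^K \pi_k^*\,p(x\mid\theta_k^*)$ with strictly positive weights $a_k=e^{c_k}Z(\theta_k)$. Collecting the distinct elements of $\{\theta_1,\dots,\theta_K,\theta_1^*,\dots,\theta_K^*\}$ into values $\phi_1,\dots,\phi_m$ and grouping equal components, this rearranges to a finite linear combination $\sum_j b_j\,p(\cdot\mid\phi_j)=0$ with distinct $\phi_j$; the linear independence in (a) then forces each net coefficient $b_j$ to vanish, i.e. at every parameter value the total model weight equals the total true weight.

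The conclusion is then a counting argument, which I expect to be the main obstacle, since it must rule out coincidences among the model parameters $\theta_k$ (which we do \emph{not} assume distinct). By (b) the $\theta_k^*$ are distinct and by (c) each carries strictly positive weight $\pi_k^*$; because all $a_k>0$, any value not equal to some $\theta_k^*$ must receive zero total model weight and hence be occupied by no model component, so every $\theta_k$ equals some $\theta^*_j$, while every $\theta^*_j$ receives strictly positive model weight and hence at least one component. With exactly $K$ components on each side and distinct true parameters, the pigeonhole principle forces a bijection $\sigma$ with $\theta_k=\theta^*_{\sigma(k)}$ and $a_k=\pi^*_{\sigma(k)}$, whence $e^{c_k}=\pi^*_{\sigma(k)}/Z(\theta^*_{\sigma(k)})=e^{c^*_{\sigma(k)}}$ and $c_k=c^*_{\sigma(k)}$. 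Conversely, each permutation leaves the mixture density unchanged and is therefore a maximizer, giving exactly \eqref{lem_eq}. The one technical point I would keep an eye on is that the first step delivers the density identity only on $\{n>0\}$; when $n$ vanishes on a set of positive measure one must read (a) as linear independence over that set, whereas the typical choice of an everywhere-positive noise (e.g. Gaussian) makes the identity global and removes the issue.
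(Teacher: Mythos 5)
Your proof is correct, and it follows the same two-step skeleton as the paper's: first show that the population maximizers are exactly the parameters that reproduce the true density, then use mixture identifiability to pin the parameters down up to permutation. The difference is in how each step is discharged. For the first step the paper simply invokes Theorem 1 of Gutmann and Hyv\"arinen (2012) under assumption (d), whereas you re-derive it from scratch via the pointwise unimodality of the integrand $g_x(v)$; your derivative computation is right, and this makes the argument self-contained. For the second step the paper compresses the identifiability argument into the single assertion that assumptions (a)--(c) yield equality of the sets $\{\pi_1 p(x\mid\theta_1),\cdots,\pi_K p(x\mid\theta_K)\}$ and $\{\pi^*_1 p(x\mid\theta^*_1),\cdots,\pi^*_K p(x\mid\theta^*_K)\}$, whereas you spell out the grouping-by-distinct-parameters and pigeonhole counting that actually justifies this step; in particular you handle possible coincidences among the model components $\theta_k$ (which are not assumed distinct), a case the paper never explicitly rules out, and you verify the converse inclusion that every permutation is indeed a maximizer, which the paper leaves implicit. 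You also flag a genuine subtlety shared by both treatments: the density identity is only forced on $\{n>0\}$, so assumption (a) must be read as linear independence of the family restricted to that set unless the noise is everywhere positive. In short, the paper's proof buys brevity by leaning on the cited theorem; yours buys rigor and self-containedness at the cost of length.
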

\begin{proof}
From Theorem 1 of Gutmann and Hyv\"arinen (2012) with assumption (d), 
\begin{equation}
	(\theta,c) \in {\rm arg} \max_{\theta,c} J_{{\rm NCE}} (\theta,c)
\end{equation}
if and only if
\begin{equation}
	p(x \mid \theta,c)=p(x \mid \theta^*,c^*),
\end{equation}
which is rewritten as
\begin{equation}
	\sum_{k=1}^K \pi_k p(x \mid \theta_k)=\sum_{k=1}^K \pi^*_k p(x \mid \theta_k^*).
\end{equation}
Then, from assumptions (a)-(c), it leads to 
\begin{equation}
	\{ \pi_1 p(x \mid \theta_1), \cdots, \pi_K p(x \mid \theta_K) \} = \{ \pi^*_1 p(x \mid \theta_1^*), \cdots, \pi^*_K p(x \mid \theta_K^*) \}.
\end{equation}
Therefore, there exists $\sigma \in S_n$ such that
\begin{equation}
	\pi_k p(x \mid \theta_k) = \pi^*_{\sigma(k)} p(x \mid \theta_{\sigma(k)}^*) \quad (k=1,\cdots,K),
\end{equation}
which is equivalent to
\begin{equation}
	\pi_k = \pi^*_{\sigma(k)}, \quad \theta_k = \theta_{\sigma(k)}^* \quad (k=1,\cdots,K)
\end{equation}
by using assumption (a).
Thus, we obtain \eqref{lem_eq}.
\end{proof}

In the above Lemma, assumption (a) holds for general exponential families including the Gaussian distribution.
Under this assumption, assumptions (b) and (c) mean that the true data-generating distribution has exactly $K$ components.
Assumption (d) is standard in noise contrastive estimation (Gutmann and Hyv\"arinen, 2012) and easily fulfilled by taking, for example, a Gaussian as the noise distribution.

Thus, the parameter in the mixture model \eqref{mix_model_supp} has indeterminacy with respect to the ordering of $K$ components. 
However, if we restrict the parameter space of \eqref{mix_model_supp} by putting order constraints on $\theta_1,\cdots,\theta_K$, 
the mixture model \eqref{mix_model_supp} becomes identifiable and so the true parameter value $(\theta^{*},\pi^{*})$ is defined uniquely.
For example, in Section 6, we sorted two Gaussian components by the mean.

After obtaining the identifiability of the mixture model \eqref{mix_model_supp} as above, the consistency of extended NCE is stated as follows.

\begin{Theorem}
Let $\xi=(\theta,c)$. Assume the following.
\begin{itemize}
\item $n(x)$ is nonzero whenever $p(x \mid \theta^*,c^*)$ is nonzero.

\item 
$\sup_{\xi} |N^{-1} \hat{J}_{{\rm NCE}}(\xi)-J_{{\rm NCE}}(\xi)| \overset{p}{\to} 0$.

\item The matrix $I=\int g(u) g(u)^{\top} P_{\nu} (u) p(u \mid \xi^{*}) {\rm d} u$ has full rank, where
\begin{align}
	g(u) = \left. \nabla \log_{\xi} p (u \mid \xi) \right|_{\xi=\xi^{*}}, \quad P_{\nu} = \frac{\nu n(u)}{p(u \mid \xi^{*}) + \nu n(u)}.
\end{align}
\end{itemize}
Then, $\hat{\xi}_{{\rm NCE}}$ in Section 3.2 converges in probability to $\xi^{*}$: $\hat{\xi}_{{\rm NCE}} \overset{p}{\to} \xi^{*}$.
\end{Theorem}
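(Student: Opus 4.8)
The plan is to treat this as a standard extremum-estimator consistency argument: the empirical maximizer $\hat\xi_{\rm NCE}$ of $N^{-1}\hat J_{\rm NCE}$ should inherit the behaviour of the unique maximizer of the population objective $J_{\rm NCE}$, the link between the two being the assumed uniform convergence $\sup_\xi |N^{-1}\hat J_{\rm NCE}(\xi) - J_{\rm NCE}(\xi)| \overset{p}{\to} 0$. So the work is to establish that $J_{\rm NCE}$ has a well-separated unique maximum at $\xi^*$ and then push this through the uniform limit.

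First I would pin down the population maximizer. The preceding Lemma already identifies $\arg\max_\xi J_{\rm NCE}(\xi)$ with the permutation orbit $\{(\theta^*_{\sigma(1)},\ldots,\theta^*_{\sigma(K)},c^*_{\sigma(1)},\ldots,c^*_{\sigma(K)}) \mid \sigma \in S_n\}$ of $\xi^*$ (the first assumption of the Theorem being exactly assumption (d) used there). As noted after the Lemma, imposing order constraints on $\theta_1,\ldots,\theta_K$ makes the model identifiable and collapses this orbit to the single point $\xi^*$; hence on the constrained parameter space $\xi^*$ is the unique maximizer of $J_{\rm NCE}$.

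Next I would promote uniqueness to well-separation using the full-rank hypothesis on $I$. A direct computation, parallel to Gutmann and Hyv\"arinen (2012), shows that the Hessian of $J_{\rm NCE}$ at $\xi^*$ equals $-I$ with $I$ as defined in the Theorem; full rank therefore makes it negative definite, so $\xi^*$ is a strict, isolated maximum with no flat direction. Combined with the global characterization from the Lemma, this gives the separation property that for every $\epsilon>0$ one has $\sup_{\|\xi-\xi^*\| \geq \epsilon} J_{\rm NCE}(\xi) < J_{\rm NCE}(\xi^*)$.

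Finally I would conclude by sandwiching. Since $\hat\xi_{\rm NCE}$ maximizes $N^{-1}\hat J_{\rm NCE}$ we have $N^{-1}\hat J_{\rm NCE}(\hat\xi_{\rm NCE}) \geq N^{-1}\hat J_{\rm NCE}(\xi^*)$, and applying uniform convergence to each side gives $J_{\rm NCE}(\hat\xi_{\rm NCE}) \geq J_{\rm NCE}(\xi^*) - o_p(1)$; the separation property then forces $\hat\xi_{\rm NCE} \overset{p}{\to} \xi^*$. I expect the substantive difficulty to lie in the identification step rather than in this limiting argument: the global uniqueness is only obtained after the order constraints break the permutation symmetry, and the passage from a unique to a well-separated maximum is precisely where the full-rank condition on $I$ (equivalently, a negative-definite Hessian at $\xi^*$) is indispensable. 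A secondary technical point is ruling out escape of the maximizer to the boundary or to infinity, which I would handle by working over a compact parameter set so that it is absorbed into the uniform-convergence hypothesis.
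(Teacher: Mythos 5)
Your proposal is correct and takes essentially the same route as the paper: the paper's proof consists of deferring to Theorem 2 of Gutmann and Hyv\"arinen (2012), whose argument is precisely the standard extremum-estimator consistency proof you outline --- identification of the population maximizer via the preceding Lemma (with order constraints breaking the permutation symmetry), well-separation of that maximum under the full-rank condition, and the uniform-convergence sandwich argument. You have simply written out the details that the paper delegates to the citation.
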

\begin{proof}
The proof is essentially the same with the original NCE.
See Theorem 2 of Gutmann and Hyv\"arinen (2012) for detail.
\end{proof}


\section*{Proof of Theorem 1 (Section 3.3)}
Since $n(y_t)$ does not depend on $\theta$ and $c$, we can rewrite (3) and (4) as
\begin{align}
	(\hat{\theta}_{{\rm NCE}},\hat{c}_{{\rm NCE}}) = {\rm arg} \max_{\theta,c} \widetilde{J}_{{\rm NCE}} (\theta,c),
\end{align}
where
\begin{align}
	\widetilde{J}_{{\rm NCE}} (\theta,c) =& \sum_{t=1}^N \log \frac{N p(x_t \mid \theta,c)}{N p(x_t \mid \theta,c)+M n(x_t)} + \sum_{t=1}^M \log \frac{1}{N p(y_t \mid \theta,c)+M n(y_t)}
\end{align}

Similarly, since $n_l(y^{(l)}_t)$ does not depend on $\theta$ and $c$, we can rewrite (12) and (13) as
\begin{align}
	(\hat{\theta}_{{\rm MNCE}},\hat{c}_{{\rm MNCE}}) = {\rm arg} \max_{\theta,c} \widetilde{J}_{{\rm MNCE}} (\theta,c),
\end{align}
where
\begin{align}
	\widetilde{J}_{{\rm MNCE}} (\theta,c) =& \sum_{t=1}^N \log \frac{N p(x_t \mid \theta,c)}{N p(x_t \mid \theta,c)+M_1 n_1(x_t)+\cdots+M_L n_L(x_t)} \nonumber \\
	&+ \sum_{l=1}^L \sum_{t=1}^{M_l} \log \frac{1}{N p(y^{(l)}_t \mid \theta,c)+M_1 n_1(y^{(l)}_t)+\cdots+M_L n_L(y^{(l)}_t)}.
\end{align}

Now, from (14), we obtain $\widetilde{J}_{{\rm NCE}} (\theta,c) = \widetilde{J}_{{\rm MNCE}} (\theta,c)$. 
Therefore,
\begin{align}
	(\hat{\theta}_{{\rm MNCE}},\hat{c}_{{\rm MNCE}}) = (\hat{\theta}_{{\rm NCE}},\hat{c}_{{\rm NCE}}).
\end{align}

\end{document}